\colorlet{lightyellow}{yellow!40}
\xdef\f@size@small{\f@size}
\xdef\f@baselineskip@small{\f@baselineskip}
\xdef\f@size@normalsize{\f@size}
\xdef\f@baselineskip@normalsize{\f@baselineskip}
\newcommand{\smalltonormalsize}{%
  \fontsize
    {\fpeval{(\f@size@small+\f@size@normalsize)/2}}
    {\fpeval{(\f@baselineskip@small+\f@baselineskip@normalsize)/2}}%
  \selectfont
}
\newtheorem{theorem}{Theorem}
\theoremstyle{remark}
\newtheorem*{remark}{Remark}
\newtheorem{definition}{Definition}
\newtheorem{corollary}{Corollary}
\DeclareMathOperator*{\argmax}{arg\,max}
\title{The Imitation Game for Educational AI}
\author{
  Shashank Sonkar\thanks{Equal contribution.}, \, Naiming Liu\footnotemark[1], \, Xinghe Chen, \, Richard G. Baraniuk \\
  Rice University \\
  Houston, TX \\
  \texttt{shashank.sonkar@rice.edu} \\
}
\begin{document}
\maketitle

\begin{abstract}
As artificial intelligence systems become increasingly prevalent in education, a fundamental challenge emerges: how can we verify if an AI truly understands how students think and reason? Traditional evaluation methods like measuring learning gains require lengthy studies confounded by numerous variables. We present a novel evaluation framework based on a two-phase Turing-like test. In Phase 1, students provide open-ended responses to questions, revealing natural misconceptions. In Phase 2, both AI and human experts, conditioned on each student's specific mistakes, generate distractors for new related questions. By analyzing whether students select AI-generated distractors at rates similar to human expert-generated ones, we can validate if the AI models student cognition. We prove this evaluation must be conditioned on individual responses - unconditioned approaches merely target common misconceptions. Through rigorous statistical sampling theory, we establish precise requirements for high-confidence validation. Our research positions conditioned distractor generation as a probe into an AI system's fundamental ability to model student thinking - a capability that enables adapting tutoring, feedback, and assessments to each student's specific needs.

\end{abstract}

\section{Introduction}

A fundamental challenge in artificial intelligence for education is developing systems that truly understand how students think and reason \cite{chowdhury2024autotutormeetslargelanguage,corbett1994knowledge,klymkowsky2024endmultiplechoicetests,sleeman}. While recent advances have produced AI systems that can engage with students and curate educational content \cite{sonkar-etal-2023-class,code_class,Markel2023GPTeachIT,schmucker2023ruffle,shridhar2022automaticgenerationsocraticsubquestions,team2024learnlm}, these systems often lack deep understanding of student cognition - how students develop misconceptions, how they reason incorrectly, and why they make specific mistakes \cite{nlet,pedalign,klymkowsky2024endmultiplechoicetests,Shibani_2024}. This understanding is critical for providing meaningful educational support, whether through tutoring, feedback, or assessment. Currently, it's difficult to determine whether an AI system possesses this crucial understanding of student thinking \cite{eedi-mining-misconceptions-in-mathematics,zaphir2024criticallyaithinkframework,malalgoqa}.

\begin{figure}[t!]
    \centering
    \includegraphics[width=\textwidth]{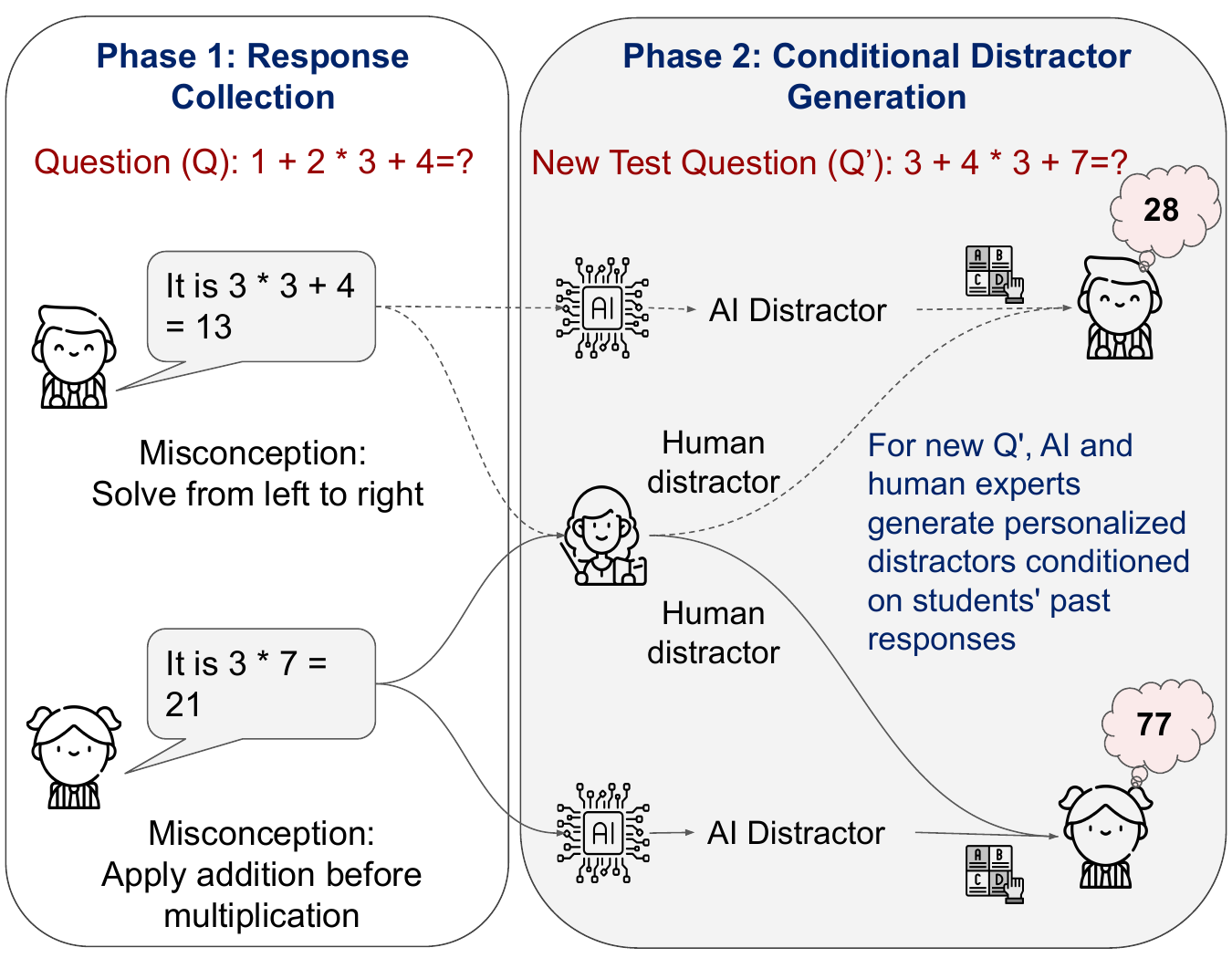}
    \caption{
    A novel Turing-like test for evaluating AI's understanding of student cognition through distractor generation conditioned on individual students' demonstrated misconceptions. In Phase 1 (Response Collection), students solve a mathematical expression Q: 1 + 2 * 3 + 4 = ?, revealing common misconceptions such as solving from left to right (yielding 13) or applying addition before multiplication (yielding 21). In Phase 2 (Conditional Distractor Generation), for a new question Q': 3 + 4 * 3 + 7 = ?, both AI and human experts generate distractors based on observed student misconceptions. The resulting multiple-choice question presents four options: the correct answer, a human expert-generated distractor, an AI-generated distractor, and a random distractor. By analyzing whether students select AI-generated distractors at rates similar to human expert-generated ones, we can rapidly evaluate if the AI system truly understands student cognition. This ability to model and predict student misconceptions is fundamental to providing effective feedback and tutoring, as an AI system that can accurately anticipate how students will misunderstand new concepts can provide targeted, personalized educational support. Note distractor generation must be conditioned on students' prior responses, as unconditioned approaches converge to targeting only the most frequent misconception, failing to test AI's understanding of individual student reasoning. While traditional learning outcome studies require extended periods to validate educational AI systems, this test provides rapid evaluation through conditional distractor generation.}
    \label{fig:Example}
\end{figure}

Traditional evaluation methods like measuring learning gains, while important, suffer from significant limitations. These studies not only require months or even years to conduct \cite{feng2024sampleefficienthumanevaluationlarge,pane2014effectiveness,roschelle2016online}, but their results are confounded by numerous variables - including changes in student motivation and external support, variations in classroom environments and teaching styles, and broader educational policy shifts that occur during the study period \cite{biderman2024lessonstrenchesreproducibleevaluation,laskar2024systematicsurveycriticalreview}. Such confounders make it difficult to isolate whether any observed improvements truly stem from the AI system's deep understanding of student cognition or from other factors entirely.

In contrast to traditional studies with their many confounding variables, we propose a two-phase evaluation methodology to directly test an AI system's understanding of student cognition. In Phase 1, students complete open-ended questions without multiple choice options, providing unbiased samples of natural misconceptions. In Phase 2, for each incorrect answer from Phase 1, the AI must generate a new but related question Q' and predict what wrong answer A' the same student would give to Q'. The key innovation is that these predictions are explicitly conditioned on each student's specific Phase 1 mistakes - without this conditioning, any approach would simply target the most common misconceptions rather than demonstrate understanding of individual reasoning. This creates a personalized test of the AI's ability to model individual student thinking. The student then receives Q' as a multiple choice question containing the correct answer, the AI's predicted wrong answer A', a human expert's predicted wrong answer A'', and a random incorrect answer. By analyzing whether students select AI-generated distractors at rates similar to human expert-generated ones, we can directly validate if the AI system genuinely models student cognition.

This approach mirrors the Turing test in a compelling way: just as the original test evaluates AI through its ability to produce human-like conversational responses, our framework evaluates AI through its ability to predict student mistakes as accurately as expert teachers do. Success requires two fundamental capabilities: understanding common patterns in student misconceptions and predicting how these misconceptions will manifest for individual students across different problems. An AI system that can match human experts in this prediction task demonstrates genuine understanding of student cognition - it's not just recognizing statistical patterns, but modeling the underlying reasoning processes that lead students to specific mistakes.

This understanding of student cognition, as demonstrated through distractor generation conditioned on a student's specific errors, is not just a narrow capability but underpins virtually every aspect of AI-supported education. An AI system that can model how a student reasons about a concept, based on observing their specific mistakes, can better adapt its tutoring, feedback, and assessments to that student's needs. Our research thus positions conditioned distractor generation not merely as an evaluation tool, but as a probe into an AI system's fundamental ability to model student thinking - a capability essential for meaningful educational support. While predicting wrong answers might seem a narrow technical challenge, we show it requires precisely the kind of deep understanding of student cognition that has been missing from current educational AI systems. This paper makes several key contributions:

\begin{enumerate}
    \item \textbf{A Novel Turing-like Test for Educational AI:}
    We introduce a practical and efficient approach to evaluating educational AI systems through a two-phase Turing-like test. Unlike traditional evaluation methods that require lengthy studies of learning outcomes, our test provides a direct measure of an AI system's ability to model student thinking by having it generate new questions and predict errors based on observed student mistakes. The test evaluates whether these AI-generated predictions are indistinguishable from those created by expert human educators, providing a concrete benchmark for measuring progress in educational AI systems' ability to understand individual student reasoning.

    \item \textbf{Positioning Conditioned Distractor Generation as a Core Test of Educational AI Capabilities:}
    Building on this testing methodology, we establish that the ability to generate distractors conditioned on a student's specific errors provides a fundamental test of an educational AI system's capabilities. Through our two-phase methodology - first observing natural student mistakes and then generating new questions with predicted errors - we show that success at this task requires deep modeling of individual student thinking. This provides both a theoretical framework for understanding what capabilities educational AI systems need and a practical method for evaluating these capabilities. Our analysis demonstrates that these same cognitive modeling abilities underpin various educational applications, from providing targeted feedback to designing adaptive assessments, making this test a powerful proxy for evaluating educational AI systems.
\end{enumerate}

\section{Test Design}

Our proposed evaluation framework consists of two distinct phases designed to validate an AI system's understanding of student misconceptions. In the first phase, we collect unbiased samples of student misconceptions through open-ended responses. The second phase tests whether the AI system can accurately predict how these misconceptions will manifest in a newly generated question.

Let $S$ denote our set of students and $D$ our domain of questions. For any student $s \in S$ and question $q \in D$, we define $A(s,q)$ as the student's response and $C(q)$ as the correct answer. 

In Phase 1, each student $s$ responds to a set of questions $Q_s \subset D$ without multiple choice options. For each incorrect response, we record the tuple $(s,q,A(s,q))$ where $A(s,q) \neq C(q)$. These tuples provide unbiased samples of natural student misconceptions, uninfluenced by the presence of pre-selected answer choices.

For Phase 2, given each Phase 1 tuple $(s,q,A(s,q))$:

1. The AI system implements a function $f_{LLM}: D \times A \to D$ that generates a new question $q'$ based on the original question and incorrect answer.

2. Given $q'$, both our AI system and a human expert independently generate predicted wrong answers. The AI system implements $g_{AI}: D \times D \times A \to A$ that generates prediction $a'$, while the human expert implements $g_H: D \times D \times A \to A$ generating prediction $a''$. A random wrong answer $r$ is also generated to serve as a control.

3. The student $s$ then receives question $q'$ as a multiple choice question with four options presented in random order: the correct answer $C(q')$, the AI-predicted wrong answer $a'$, the expert-predicted wrong answer $a''$, and the random wrong answer $r$. The student's selection is denoted $P_2(s,q,q')$.

\section{Statistical Sampling Theory for Personalized Misconception Tracking}
We develop a statistical framework to determine how many students and questions are needed to validate that an AI system can effectively track and predict individual student misconceptions. We show that despite the vast space of possible misconceptions, we can achieve high-confidence validation with surprisingly few students and questions. The key insight is that student misconceptions tend to cluster around common patterns, allowing us to validate an AI system's understanding through careful sampling theory.

\begin{definition}[Misconception Space]
For any topic $T$, let $M(T)$ be the set of all possible misconceptions where $|M(T)| = n$. Each misconception $m_i \in M(T)$ has an associated probability $P(m_i)$ representing its frequency in the student population.
\end{definition}

\begin{theorem}[Misconception Concentration]
For any topic $T$ and error threshold $\epsilon > 0$, there exists a subset $S_k \subset M(T)$ where $|S_k| = k \ll n$ such that:
\[\sum_{m_i \in S_k} P(m_i) \geq (1-\epsilon)\]

This theorem formalizes our observation about clustering: a small set of misconceptions accounts for most student errors.
\end{theorem}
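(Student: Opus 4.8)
The plan is to reduce the statement to an elementary fact about the tail of a sorted probability vector, after first isolating the hypothesis that actually makes it true. As literally stated the claim is false: if $P$ is uniform on $M(T)$ then any $S_k$ with $\sum_{m_i \in S_k} P(m_i) \ge 1-\epsilon$ must have $|S_k| \ge (1-\epsilon)n$, which is not $\ll n$. So the proof must invoke the ``clustering'' regularity the paper appeals to, and the first step is to state it precisely. I would introduce a decay hypothesis on the order statistics of $P$: relabel the misconceptions so that $P(m_{(1)}) \ge P(m_{(2)}) \ge \cdots \ge P(m_{(n)})$, and assume these sorted frequencies are dominated by a rapidly decaying envelope --- either a power law $P(m_{(i)}) \le C\, i^{-\alpha}$ with $\alpha > 1$ (Zipf-type, which is what is empirically observed for systematic student errors), or the easier-to-verify geometric bound $P(m_{(i)}) \le C\rho^{\,i}$ with $\rho < 1$. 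This could be stated as a preceding \texttt{hypothesis} environment or folded into the definition of $M(T)$.

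Second, with the hypothesis in place the construction is immediate: take $S_k = \{m_{(1)}, \dots, m_{(k)}\}$, the $k$ most frequent misconceptions. For any fixed cardinality $k$ this choice maximizes $\sum_{m_i \in S_k} P(m_i)$, so it suffices to bound the tail $R(k) := \sum_{i > k} P(m_{(i)})$ and choose $k$ with $R(k) \le \epsilon$. Under the power-law envelope, comparison with an integral gives
\[
R(k) \le C \sum_{i > k} i^{-\alpha} \le C \int_{k}^{\infty} x^{-\alpha}\,dx = \frac{C}{\alpha - 1}\, k^{-(\alpha - 1)},
\]
so $R(k) \le \epsilon$ as soon as $k \ge \big(C / ((\alpha-1)\epsilon)\big)^{1/(\alpha-1)}$; under the geometric envelope the same bookkeeping with a geometric series gives $k = O\!\big(\log(1/\epsilon)\big)$. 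In both cases the required $k$ depends only on $\epsilon$ and the envelope constants, not on $n$, so for topics with a large misconception space we obtain $k \ll n$, which is exactly the claim. One should also dispatch the degenerate edge case: if $n$ itself is small the statement is vacuous or trivially witnessed by $S_k = M(T)$.

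Third, the genuine obstacle --- the mathematical part being routine --- is justifying the decay hypothesis rather than proving the tail bound. I would support it on two grounds: (i) empirically, frequency counts of student errors in arithmetic, algebra, and physics are well documented to be heavy-tailed with short effective support, a handful of systematic ``buggy rules'' accounting for the overwhelming majority of mistakes, as in the classical diagnostic-modeling literature; and (ii) structurally, misconceptions are generated by composing a small number of systematic mal-rules with problem features, so the induced distribution inherits concentration. I would also remark that the argument is monotone and robust: any envelope that is summable with a controllable tail (not only pure power-law or geometric) suffices, and a cruder assumption-free fallback is to \emph{define} $k(\epsilon)$ as the minimal top-$k$ mass threshold per topic and report it as an empirically measured quantity, with the theorem then asserting the measured smallness rather than deriving it --- keeping the statement honest about where the real content lives.
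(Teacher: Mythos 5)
The paper offers no proof of this theorem at all: it is stated as a formalization of an empirical observation about clustering, with no supporting argument, so there is no internal derivation to compare yours against. Your diagnosis is exactly right, and it is the more honest treatment. The claim as literally written is false, and your uniform-distribution counterexample (any $S_k$ with mass $\geq 1-\epsilon$ is forced to have $k \geq (1-\epsilon)n$) pins down precisely why a decay hypothesis on the sorted frequencies is not optional. Given such a hypothesis, your construction (take the top-$k$ misconceptions, which maximizes the captured mass at fixed cardinality, then bound the tail by an integral comparison in the power-law case or a geometric series in the geometric case) is correct, and the resulting $k$ depends only on $\epsilon$ and the envelope constants rather than on $n$, which is the only rigorous sense in which ``$k \ll n$'' can be made meaningful. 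One caution worth adding if this were folded back into the paper: the downstream results (Sample Complexity, Question Coverage, Feedback Capability) all depend on $p_{\min}$, the smallest probability of a misconception in $S_k$. Under a pure power-law envelope $p_{\min}$ can be as small as $P(m_{(k)}) = O(k^{-\alpha})$, which then enters the bound $N = O(k\log(1/\delta)/p_{\min})$ multiplicatively; so whatever decay hypothesis you adopt here should be stated in a form that also controls $p_{\min}$ from below, otherwise fixing this theorem quietly weakens the later ones.
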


\begin{theorem}[Sample Complexity]
\label{theorem:sample_complexity}
For any topic T, to validate an AI system's understanding of misconceptions in $S_k$ with probability $(1-\delta)$, the required sample size $N$ is:
\[N = O\Bigg(\frac{k \log(1/\delta)}{p_{min}}\Bigg)\]
where $k \ll n$ is the size of $S_k$ and $p_{min}$ is the minimum probability of any misconception in $S_k$.
\end{theorem}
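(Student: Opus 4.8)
The plan is to recast the validation requirement as a \emph{coverage-plus-estimation} problem and then combine a weighted coupon-collector argument with Chernoff concentration and a union bound. First I would make precise what ``validate the AI's understanding of $S_k$ with probability $1-\delta$'' means: for every target misconception $m_i \in S_k$, Phase~1 must surface at least $t$ independent students who exhibit $m_i$, where $t = t(\delta,k)$ is the number of Phase~2 trials needed so that the statistical test comparing the student-selection rate of the AI distractor $a'$ against that of the expert distractor $a''$ (with the random distractor $r$ as a baseline) decides correctly with error at most $\delta/k$; a one-sided Hoeffding/Chernoff bound on a bounded $[0,1]$ statistic gives $t = O(\log(k/\delta))$, the constant absorbing the target accuracy and the effect size separating ``AI indistinguishable from the expert'' from ``AI no better than the control''. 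This reduces the theorem to the question: how large must $N$ be so that, with probability $\ge 1-\delta$, each of the $k$ misconceptions is observed at least $t$ times among $N$ i.i.d.\ Phase~1 students?

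Next I would model Phase~1 as $N$ i.i.d.\ draws from the misconception distribution projected onto $S_k$, so the number $N_i$ of students exhibiting $m_i$ is $\mathrm{Binomial}(N,p_i)$ with $p_i \ge p_{\min}$ and $\mathbb{E}[N_i] = Np_i \ge Np_{\min}$. A multiplicative Chernoff lower-tail bound gives $\Pr[N_i < t] \le \exp\!\big(-\tfrac{1}{8}Np_{\min}\big)$ whenever $Np_{\min} \ge 2t$. Taking a union bound over the $k$ misconceptions in $S_k$, the probability that \emph{some} target misconception is under-sampled is at most $k\exp(-\tfrac18 N p_{\min})$; setting this to $\delta$ and solving yields $N \ge \tfrac{8}{p_{\min}}\log(k/\delta)$, and enforcing the side condition $Np_{\min}\ge 2t$ with $t=O(\log(k/\delta))$ costs only another constant factor. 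Hence $N = O\!\big(\tfrac{\log(k/\delta)}{p_{\min}}\big)$, which is dominated by $O\!\big(\tfrac{k\log(1/\delta)}{p_{\min}}\big)$ whenever $k\ge 1$ and $\delta$ is bounded away from $1$, giving the stated bound; alternatively, a crude sequential variant — collecting the $t$ samples for each of the $k$ misconceptions from disjoint student pools, each pool a negative-binomial stopping time of expected size $O(\log(1/\delta)/p_{\min})$ — reproduces the $k$-factor form directly.

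I would close by noting that the $t=1$ special case is exactly the weighted coupon-collector problem, whose high-probability collection time $O(\log(k/\delta)/p_{\min})$ matches the bound. The main obstacle is the first step: pinning down the Phase~2 test precisely — its null and alternative hypotheses and the effect size $\gamma$ it must detect — since the value of $t$, and with it the constant in $N$, is entirely determined by that specification, so without committing to a concrete test one obtains only the order and not the constant. A secondary, easier wrinkle is that a single student may exhibit several or none of the misconceptions in $S_k$, so the counts $\{N_i\}$ are negatively associated rather than independent; this does not affect the argument, because the union bound needs no independence and the marginal law of each $N_i$ is still binomial with success probability $\ge p_{\min}$. Finally I would flag that $p_{\min}$ can itself be as small as $\Theta(\epsilon/k)$ (as permitted by the Misconception Concentration theorem), so the bound, while logarithmic in $1/\delta$, inherits that dependence — the intended and unavoidable price of covering the tail of the misconception distribution.
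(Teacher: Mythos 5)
Your proposal is correct and follows essentially the same route as the paper: a union bound over the $k$ misconceptions combined with an exponential tail bound on the event that a given misconception goes unobserved, yielding $N \gtrsim \log(k/\delta)/p_{\min}$ --- the paper's proof is exactly your $t=1$ weighted coupon-collector case, using $(1-p_{\min})^N$ and $\log(1-x)\le -x$ directly in place of a Chernoff bound. Your refinements --- making the validation criterion precise by requiring $t=O(\log(k/\delta))$ observations per misconception to power the Phase~2 test, handling the negative association of the counts, and noting explicitly that the derived bound $O(\log(k/\delta)/p_{\min})$ is sharper than the stated $O(k\log(1/\delta)/p_{\min})$ --- go beyond the paper, which implicitly takes $t=1$ and silently states the looser form in the theorem.
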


\begin{proof}
The probability of not observing a specific misconception $m_i$ in $N$ trials is $(1-P(m_i))^N \leq (1-p_{min})^N$. By union bound, the probability of missing any misconception in $S_k$ is at most $k(1-p_{min})^N$. Setting this less than $\delta$ and solving:
\[\log(k) + N\log(1-p_{min}) \leq \log(\delta)\]
Using $\log(1-x) \leq -x$, we get $N \geq \frac{\log(k/\delta)}{p_{min}}$.

This bound has profound practical implications: for typical values $(k = 10$ common misconceptions, $\delta = 0.05$ for 95\% confidence level, and $p_{min} = 0.05$ assuming even the rarest common misconception occurs in 1\% of students), this requires approximately 100 students, making testing feasible in practice.

\end{proof}

\begin{theorem}[Question Coverage]
\label{theorem:question_coverage}
For a topic T with k common misconceptions in $S_k$, where each question can test t misconceptions simultaneously, the number of questions Q needed to cover all k misconceptions with probability at least $(1-\delta)$ satisfies:
\[Q = O\left(\frac{k}{t}\ln\left(\frac{k}{\delta}\right)\right)\]
\end{theorem}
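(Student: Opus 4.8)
The plan is to reduce the Question Coverage statement to a coupon-collector-style argument mirroring the proof of Theorem~\ref{theorem:sample_complexity}, but now with the twist that each question tests $t$ misconceptions at once rather than one. First I would set up the probabilistic model: suppose we draw $Q$ questions, and each question independently ``covers'' a uniformly random $t$-subset of the $k$ common misconceptions in $S_k$ (this uniformity assumption, or a mild lower bound on the probability that any fixed misconception is among the $t$ tested, is exactly the kind of hypothesis the theorem's $O(\cdot)$ conceals, and I would state it explicitly). Then for a fixed misconception $m_i \in S_k$, the probability that a single question fails to cover it is $1 - t/k$, so the probability that all $Q$ questions miss $m_i$ is $(1 - t/k)^Q$.

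Next I would apply the union bound over the $k$ misconceptions exactly as in the earlier proof: the probability that some misconception in $S_k$ is left uncovered is at most $k(1-t/k)^Q$. Setting $k(1-t/k)^Q \le \delta$ and using the inequality $\log(1-x) \le -x$ (the same estimate invoked in the Sample Complexity proof, here with $x = t/k$), I get $\log k - Q\,t/k \le \log\delta$, hence
\[
Q \ge \frac{k}{t}\log\!\left(\frac{k}{\delta}\right),
\]
which is precisely the claimed bound $Q = O\big(\tfrac{k}{t}\ln(k/\delta)\big)$. I would remark that this recovers the classical coupon-collector scaling: when $t=1$ it matches the familiar $\Theta(k\log k)$ behavior, and the factor $1/t$ reflects the intuitive speedup from batching $t$ probes per question.

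The main obstacle I anticipate is not the arithmetic but the modeling assumption. The clean bound $(1-t/k)^Q$ depends on each question being able to probe an essentially arbitrary combination of $t$ misconceptions with roughly uniform reach; in reality which $t$ misconceptions a given question can elicit is constrained by the question's content, and misconceptions have the non-uniform frequencies $P(m_i)$ from the Misconception Space definition. To make the proof honest I would either (i) assume a coverage design in which every misconception appears in the ``testable set'' of a random question with probability at least $t/k$, reducing to the computation above, or (ii) weight by $p_{\min}$ as in Theorem~\ref{theorem:sample_complexity}, replacing $t/k$ with $t\,p_{\min}$ and absorbing the difference into the constant. A secondary, minor point is handling the integrality of $Q$ and the case $t \ge k$ (where a single well-designed question suffices), which I would dispatch with a ceiling and a trivial remark. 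Everything else is a direct transcription of the union-bound-plus-$\log(1-x)\le -x$ template already established in the paper.
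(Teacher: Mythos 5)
Your proposal is correct and follows essentially the same route as the paper's own proof: a per-misconception miss probability of $(1-t/k)^Q$, a union bound over the $k$ misconceptions, and the estimate $\ln(1-x)\le -x$ to extract $Q \ge \frac{k}{t}\ln(k/\delta)$. Your explicit flagging of the uniform-coverage modeling assumption (which the paper leaves implicit) and the $t\ge k$ edge case is a welcome refinement, but does not change the argument.
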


\begin{proof}
For any single misconception $m_i$, the probability it is not tested by a given question is $(1-\frac{t}{k})$. After Q questions, the probability of not testing $m_i$ is $(1-\frac{t}{k})^Q$. By union bound, the probability of missing any misconception is at most $k(1-\frac{t}{k})^Q$. Setting this less than $\delta$ and solving:
\[k(1-\frac{t}{k})^Q \leq \delta\]
\[\ln(k) + Q\ln(1-\frac{t}{k}) \leq \ln(\delta)\]
Using $\ln(1-x) \leq -x$:
\[Q \geq \frac{k}{t}\ln(\frac{k}{\delta})\]

\begin{remark}
For typical values ($k = 10$, $t = 2$, $p_{min} = 0.01$, $\delta = 0.05$), this requires approximately $2,500$ total responses, or about $25$ questions per student across $100$ students.
\end{remark}

\end{proof}

\section{Validating AI Prediction of Student Misconceptions}

To meaningfully compare AI and human expert ability to predict student misconceptions, we need precise statistical criteria. Specifically, when a student makes a particular mistake on one question, we want to test whether an AI system can predict their likely mistakes on related questions as accurately as human experts can. This means we need to prove two things: first, that students choose the AI's predicted wrong answers at rates similar to human-expert predicted wrong answers, and second, that both AI and human predictions significantly outperform random guessing. In this section, we establish the formal statistical framework for making these comparisons and determine the sample sizes needed for rigorous validation.

\begin{definition}\label{def:prediction_quality}
For any Phase 1 tuple $(S,Q,A)$ where student $S$ made mistake $A$ on question $Q$, and corresponding Phase 2 question $Q'$, we define:
\[
\mathrm{Choice}(S,Q') = 
\begin{cases}
\mathrm{AI} & \text{if student selects AI-predicted answer } A' \\[0.5em]
\mathrm{Human} & \text{if student selects expert-predicted answer } A'' \\[0.5em]
\mathrm{Random} & \text{if student selects random wrong answer } R \\[0.5em]
\mathrm{Correct} & \text{if student selects correct answer } C
\end{cases}
\]

The prediction quality for each source (AI or Human) is the probability their predicted wrong answer matches the student's actual mistake:
\begin{align}
p_{\mathrm{AI}} &= \mathbb{P}(\mathrm{Choice}(S,Q') = \mathrm{AI} \mid (S,Q,A)) \label{eq:p_ai} \\[0.5em]
p_{\mathrm{Human}} &= \mathbb{P}(\mathrm{Choice}(S,Q') = \mathrm{Human} \mid (S,Q,A)) \label{eq:p_human}
\end{align}
\end{definition}

\begin{theorem}\label{thm:statistical_equivalence}
An AI system's prediction quality matches human expert quality if, with confidence level $1-\alpha$:

\begin{enumerate}
\item The difference in prediction rates is small:
\begin{equation}\label{eq:equiv}
|p_{\mathrm{AI}} - p_{\mathrm{Human}}| \leq \epsilon
\end{equation}

\item Both significantly outperform random guessing:
\begin{align}
p_{\mathrm{AI}} &> p_{\mathrm{Random}} + \delta \label{eq:ai_random} \\
p_{\mathrm{Human}} &> p_{\mathrm{Random}} + \delta \label{eq:human_random}
\end{align}
\end{enumerate}

where:
\begin{itemize}
\item $\epsilon$ is the maximum acceptable difference between AI and human performance
\item $\delta$ is the minimum required improvement over random guessing
\item $p_{\mathrm{Random}} = \frac{1}{4}$ is the probability of random selection from 4 choices
\end{itemize}
\end{theorem}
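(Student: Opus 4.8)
The plan is to treat each Phase 2 trial as an i.i.d.\ categorical observation with outcome in $\{\mathrm{AI},\mathrm{Human},\mathrm{Random},\mathrm{Correct}\}$ and to reduce the three required comparisons — the equivalence claim \eqref{eq:equiv} and the two superiority claims \eqref{eq:ai_random}, \eqref{eq:human_random} — to concentration statements about the empirical frequencies $\hat p_{\mathrm{AI}}$, $\hat p_{\mathrm{Human}}$, $\hat p_{\mathrm{Random}}$ computed over $N$ trials. First I would define these indicator-average estimators and observe that each is an average of bounded $[0,1]$ random variables, so Hoeffding's inequality gives $\mathbb{P}\big(|\hat p_{\mathrm{X}} - p_{\mathrm{X}}| \ge \tau\big) \le 2e^{-2N\tau^2}$ for each source $\mathrm{X}$.

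Next I would allocate the failure budget: split $\alpha$ into three pieces (one per estimator) and apply a union bound, so that with probability at least $1-\alpha$ all three empirical quantities lie within $\tau = \sqrt{\tfrac{1}{2N}\ln(6/\alpha)}$ of their true values. On this event, $|\hat p_{\mathrm{AI}} - \hat p_{\mathrm{Human}}| \le |p_{\mathrm{AI}} - p_{\mathrm{Human}}| + 2\tau$ and $\hat p_{\mathrm{AI}} - \hat p_{\mathrm{Random}} \ge (p_{\mathrm{AI}} - p_{\mathrm{Random}}) - 2\tau$, with the analogous bound for the human estimate. Hence, running the test with an empirical equivalence margin $\epsilon - 2\tau$ and an empirical superiority margin $\delta + 2\tau$, any decision of the form ``$|\hat p_{\mathrm{AI}} - \hat p_{\mathrm{Human}}| \le \epsilon$ and $\hat p_{\mathrm{AI}},\hat p_{\mathrm{Human}} > \hat p_{\mathrm{Random}} + \delta$'' certifies the true conditions of the theorem at confidence $1-\alpha$. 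Solving $2\tau \le \tfrac12\min(\epsilon,\delta)$ for $N$ then yields the explicit requirement $N = O\!\big(\tfrac{1}{\min(\epsilon^2,\delta^2)}\ln(1/\alpha)\big)$, which I would report as the concrete sample size.

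The main obstacle I anticipate is the equivalence direction: unlike the superiority inequalities, the bound $|p_{\mathrm{AI}} - p_{\mathrm{Human}}| \le \epsilon$ cannot be certified from finite data unless the true gap is bounded strictly away from $\epsilon$ — the boundary case admits no finite-sample certificate. So the clean statement requires either assuming a margin (true gap at most $\epsilon - \gamma$, after which $N$ scales like $\gamma^{-2}$), or phrasing the conclusion as a TOST-style acceptance region whose guarantee is one-directional (failure to reject equivalence rather than a proof of it). I would make this explicit. I would also note the minor subtlety that $\hat p_{\mathrm{AI}}$, $\hat p_{\mathrm{Human}}$, $\hat p_{\mathrm{Random}}$ are negatively correlated components of a single multinomial, which only tightens the picture, so the independent-coordinate union bound is already conservative and needs no repair. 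A final clean-up step is to reconcile this $N$ with the per-misconception counts of Theorems~\ref{theorem:sample_complexity} and~\ref{theorem:question_coverage}, so that the same pool of responses simultaneously covers $S_k$ and supplies the comparison statistics above.
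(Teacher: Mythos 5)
Your proposal is correct in substance but takes a genuinely different route from the paper. The paper's proof is an asymptotic-normality argument: it defines the indicator variables $X_{\mathrm{AI}}, X_{\mathrm{Human}}$, invokes the Lindeberg--L\'evy CLT, and crucially works with the \emph{paired difference} statistic, whose variance $\sigma^2_d$ includes a correlation term $-2\rho\sqrt{p_{\mathrm{AI}}p_{\mathrm{Human}}(1-p_{\mathrm{AI}})(1-p_{\mathrm{Human}})}$; it then runs a McNemar-style equivalence test and one-sided superiority tests to get sample sizes $n_1 = 2 z_{\alpha/2}^2 \sigma_d^2/\epsilon^2$ and $n_2 = z_\alpha^2 p_{\mathrm{Random}}(1-p_{\mathrm{Random}})/\delta^2$, taking $n \ge \max\{n_1,n_2\}$ and checking this against the $n \approx 400$ responses available from Theorems~\ref{theorem:sample_complexity} and~\ref{theorem:question_coverage}. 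You instead use finite-sample Hoeffding bounds on the three marginal frequencies plus a union bound. What you lose is the paired structure --- your independent-coordinate bound cannot exploit the negative correlation between $X_{\mathrm{AI}}$ and $X_{\mathrm{Human}}$ within a trial, so your constants are more conservative than the paper's $\sigma_d^2$-based calculation --- but what you gain is a non-asymptotic guarantee with no normal approximation, and, more importantly, an honest treatment of the equivalence direction: your observation that $|p_{\mathrm{AI}} - p_{\mathrm{Human}}| \le \epsilon$ admits no finite-sample certificate at the boundary and requires either a margin assumption or a TOST-style one-directional acceptance is a real logical point that the paper's McNemar setup glosses over. One small thing to tighten: having introduced the adjusted margins $\epsilon - 2\tau$ and $\delta + 2\tau$, your stated decision rule should actually use them (accept when $|\hat p_{\mathrm{AI}} - \hat p_{\mathrm{Human}}| \le \epsilon - 2\tau$ and $\hat p_{\mathrm{AI}}, \hat p_{\mathrm{Human}} > \hat p_{\mathrm{Random}} + \delta + 2\tau$) rather than the raw $\epsilon$ and $\delta$; as written the certification claim does not follow, though the fix is exactly the adjustment you already computed.
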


\begin{proof}
Let $n$ be the total number of Phase 2 responses. For each student-question pair $(S,Q')$, define indicators:

\begin{align}
X_{\mathrm{AI}}(S,Q') &= \begin{cases}
1 & \text{if } \mathrm{Choice}(S,Q') = \mathrm{AI} \\
0 & \text{otherwise}
\end{cases} \label{eq:x_ai} \\[1em]
X_{\mathrm{Human}}(S,Q') &= \begin{cases}
1 & \text{if } \mathrm{Choice}(S,Q') = \mathrm{Human} \\
0 & \text{otherwise}
\end{cases} \label{eq:x_human}
\end{align}

Let $\bar{X}_{\mathrm{AI}}$ and $\bar{X}_{\mathrm{Human}}$ be their respective means across all $n$ responses.

\paragraph{Asymptotic Normality}
By the Lindeberg-Lévy Central Limit Theorem, since $X_{\mathrm{AI}}(S,Q')$ and $X_{\mathrm{Human}}(S,Q')$ are i.i.d. Bernoulli random variables with finite variance:

\begin{align}
\sqrt{n}(\bar{X}_{\mathrm{AI}} - p_{\mathrm{AI}}) &\xrightarrow{d} \mathcal{N}(0, p_{\mathrm{AI}}(1-p_{\mathrm{AI}})) \\
\sqrt{n}(\bar{X}_{\mathrm{Human}} - p_{\mathrm{Human}}) &\xrightarrow{d} \mathcal{N}(0, p_{\mathrm{Human}}(1-p_{\mathrm{Human}}))
\end{align}

For the difference statistic, since responses are paired:
\begin{equation}
\sqrt{n}(\bar{X}_{\mathrm{AI}} - \bar{X}_{\mathrm{Human}} - (p_{\mathrm{AI}} - p_{\mathrm{Human}})) \xrightarrow{d} \mathcal{N}(0, \sigma^2_d)
\end{equation}

where $\sigma^2_d = p_{\mathrm{AI}}(1-p_{\mathrm{AI}}) + p_{\mathrm{Human}}(1-p_{\mathrm{Human}}) - 2\rho\sqrt{p_{\mathrm{AI}}p_{\mathrm{Human}}(1-p_{\mathrm{AI}})(1-p_{\mathrm{Human}})}$ and $\rho$ is the correlation coefficient.

\paragraph{Sample Size Calculation}
For condition \eqref{eq:equiv}, using McNemar's test:
\begin{equation}
Z_1 = \frac{\bar{X}_{\mathrm{AI}} - \bar{X}_{\mathrm{Human}}}{\sqrt{\sigma^2_d/n}} \sim \mathcal{N}(0,1)
\end{equation}

For equivalence testing with margin $\epsilon$:
\begin{equation}
n_1 = \frac{2(z_{\alpha/2})^2\sigma^2_d}{\epsilon^2}
\end{equation}

For conditions \eqref{eq:ai_random} and \eqref{eq:human_random}, using one-sided tests:
\begin{equation}
n_2 = \frac{(z_\alpha)^2p_{\mathrm{Random}}(1-p_{\mathrm{Random}})}{\delta^2}
\end{equation}

The required sample size is therefore:
\begin{equation}
n \geq \max\{n_1, n_2\}
\end{equation}

\paragraph{Relationship to Earlier Sampling Framework}
From Theorem \ref{theorem:sample_complexity}, we need $N = O(\frac{k\log(1/\delta)}{p_{\mathrm{min}}})$ students. From Theorem \ref{theorem:question_coverage}, we need $Q = O(\frac{k}{t}\ln(\frac{k}{\delta}))$ questions per student. The total Phase 2 responses $n$ must satisfy:

\begin{equation}
n \leq N \cdot Q = O\left(\frac{k^2\ln(k)\ln(1/\delta)}{tp_{\mathrm{min}}}\right)
\end{equation}

For typical values ($k=10$, $t=2$, $p_{\mathrm{min}}=0.05$, $\delta=0.05$), this yields $n \approx 400$, which satisfies our calculated sample size requirements when $\epsilon = 0.1$ and $\alpha = 0.05$.
\end{proof}

\begin{remark}
The sample size calculation assumes responses are independent. In practice, there may be correlation between responses from the same student, suggesting use of mixed-effects models for more precise analysis.
\end{remark}

\begin{definition}[Victory Conditions]
Given Phase 2 response data, we say:

\begin{enumerate}
\item AI wins if $p_{\mathrm{AI}} > p_{\mathrm{Human}} + \epsilon$ with confidence $1-\alpha$

\item Human wins if $p_{\mathrm{Human}} > p_{\mathrm{AI}} + \epsilon$ with confidence $1-\alpha$

\item Draw occurs if $|p_{\mathrm{AI}} - p_{\mathrm{Human}}| \leq \epsilon$ with confidence $1-\alpha$
\end{enumerate}

where both $p_{\mathrm{AI}}$ and $p_{\mathrm{Human}}$ must exceed $p_{\mathrm{Random}} + \delta$ for the result to be valid.
\end{definition}

\begin{corollary}[Statistical Test for Victory]
Let $Z_{\mathrm{diff}} = \frac{\bar{X}_{\mathrm{AI}} - \bar{X}_{\mathrm{Human}}}{\sqrt{\sigma^2_d/n}}$

Then:
\begin{itemize}
\item If $Z_{\mathrm{diff}} > z_{1-\alpha}$: AI wins
\item If $Z_{\mathrm{diff}} < -z_{1-\alpha}$: Human wins
\item Otherwise: Draw
\end{itemize}

where $z_{1-\alpha}$ is the $(1-\alpha)$ quantile of the standard normal distribution.
\end{corollary}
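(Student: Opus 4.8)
The plan is to derive the trichotomy directly from the asymptotic normality already established in the proof of Theorem~\ref{thm:statistical_equivalence}, combined with the Victory Conditions. First I would recall that the paired difference satisfies $\sqrt{n}\big(\bar{X}_{\mathrm{AI}} - \bar{X}_{\mathrm{Human}} - (p_{\mathrm{AI}} - p_{\mathrm{Human}})\big) \xrightarrow{d} \mathcal{N}(0,\sigma^2_d)$, so that under the hypothesis $p_{\mathrm{AI}} = p_{\mathrm{Human}}$ the studentized statistic $Z_{\mathrm{diff}}$ is asymptotically $\mathcal{N}(0,1)$, after replacing $\sigma_d^2$ by a consistent estimator and invoking Slutsky's theorem. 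This makes $Z_{\mathrm{diff}}$ an asymptotic pivot whose tail probabilities are governed exactly by the standard-normal quantiles $z_{1-\alpha}$.

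Second, I would treat each branch as a one-sided Wald test. For the AI-wins branch, the event $\{Z_{\mathrm{diff}} > z_{1-\alpha}\}$ is the rejection region of the level-$\alpha$ test of $H_0 : p_{\mathrm{AI}} \le p_{\mathrm{Human}}$ against $H_1 : p_{\mathrm{AI}} > p_{\mathrm{Human}}$, so observing it certifies $p_{\mathrm{AI}} > p_{\mathrm{Human}}$ with confidence $1-\alpha$; the symmetric argument handles the Human-wins branch. For the draw branch, $\{|Z_{\mathrm{diff}}| \le z_{1-\alpha}\}$ is precisely the complementary event (failure to reject in either direction), which I would couple with the sample-size choice $n_1 = 2 z_{\alpha/2}^2 \sigma_d^2 / \epsilon^2$ from the theorem: at that $n$, non-rejection forces the confidence interval for $p_{\mathrm{AI}} - p_{\mathrm{Human}}$ to lie inside $[-\epsilon,\epsilon]$, yielding $|p_{\mathrm{AI}} - p_{\mathrm{Human}}| \le \epsilon$ with confidence $1-\alpha$ as required by the Victory Conditions. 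Since the three events $\{Z_{\mathrm{diff}} > z_{1-\alpha}\}$, $\{Z_{\mathrm{diff}} < -z_{1-\alpha}\}$, $\{|Z_{\mathrm{diff}}| \le z_{1-\alpha}\}$ partition $\mathbb{R}$, exactly one verdict is returned.

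Third, I would close the loop on validity. The Victory Conditions demand $p_{\mathrm{AI}}, p_{\mathrm{Human}} > p_{\mathrm{Random}} + \delta$, which I would certify by reusing the two one-sided statistics from the proof of Theorem~\ref{thm:statistical_equivalence}, of the form $\big(\bar{X}_{\mathrm{AI}} - \tfrac14 - \delta\big)/\sqrt{\widehat{\sigma}^2_{\mathrm{AI}}/n}$ and its Human analogue: for $n \ge n_2 = z_\alpha^2 p_{\mathrm{Random}}(1-p_{\mathrm{Random}})/\delta^2$, both exceeding $z_{1-\alpha}$ establishes the margin over chance with confidence $1-\alpha$. Taking $n \ge \max\{n_1, n_2\}$ then makes the verdict simultaneously valid and decisive, which is exactly the conclusion of the corollary.

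The main obstacle I anticipate is the slight mismatch between the $\epsilon$-margin appearing in the Victory Conditions and the plain statistic $Z_{\mathrm{diff}}$, which is centered at $p_{\mathrm{AI}} - p_{\mathrm{Human}} = 0$ rather than at $\pm\epsilon$. The cleanest resolution is the one sketched above --- let the sample size $n_1$ carry the burden, so that a raw non-rejection automatically upgrades to an $\epsilon$-width confidence statement --- but one must note this makes the AI-wins and Human-wins branches mildly conservative: they certify a strict sign for $p_{\mathrm{AI}} - p_{\mathrm{Human}}$, and with $n \ge n_1$ the observed gap also exceeds $\epsilon$, whereas literal certification of $p_{\mathrm{AI}} > p_{\mathrm{Human}} + \epsilon$ would instead require recentering the numerator of $Z_{\mathrm{diff}}$ by $\epsilon$. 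I would flag this as the single place where the argument must commit to an interpretation; once the pivotal distribution and the partition of $\mathbb{R}$ are in place, everything else is routine.
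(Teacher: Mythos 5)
Your proposal follows essentially the same route as the paper's own proof: asymptotic normality of the paired difference under the null, one-sided $z$-tests at level $\alpha$ for the two win branches, the complementary event as the draw, and the side condition that both $p_{\mathrm{AI}}$ and $p_{\mathrm{Human}}$ must beat $p_{\mathrm{Random}}+\delta$ for the verdict to be meaningful. If anything you are more careful than the paper, which silently conflates the uncentered statistic with the $\epsilon$-shifted hypotheses in its chain $P(Z_{\mathrm{diff}} > \epsilon\sqrt{n}/\sigma_d) \geq 1-\alpha \implies Z_{\mathrm{diff}} > z_{1-\alpha}$; your explicit resolution via the sample-size requirement $n \geq n_1$ (or, alternatively, recentering the numerator by $\epsilon$) is exactly the gap the paper's argument leaves open.
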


\begin{proof}[Proof of Victory Conditions Corollary]
Under the null hypothesis of equal prediction quality ($p_{\mathrm{AI}} = p_{\mathrm{Human}}$), and given the asymptotic normality shown earlier:

\[Z_{\mathrm{diff}} = \frac{\bar{X}_{\mathrm{AI}} - \bar{X}_{\mathrm{Human}}}{\sqrt{\sigma^2_d/n}} \sim \mathcal{N}(0,1)\]

For AI victory, we require:
\begin{align*}
P(p_{\mathrm{AI}} > p_{\mathrm{Human}} + \epsilon) &\geq 1-\alpha \\
\implies P\left(Z_{\mathrm{diff}} > \frac{\epsilon\sqrt{n}}{\sigma_d}\right) &\geq 1-\alpha \\
\implies Z_{\mathrm{diff}} &> z_{1-\alpha}
\end{align*}

Similarly for human victory:
\begin{align*}
P(p_{\mathrm{Human}} > p_{\mathrm{AI}} + \epsilon) &\geq 1-\alpha \\
\implies P\left(Z_{\mathrm{diff}} < -\frac{\epsilon\sqrt{n}}{\sigma_d}\right) &\geq 1-\alpha \\
\implies Z_{\mathrm{diff}} &< -z_{1-\alpha}
\end{align*}

If neither condition holds, we have insufficient evidence to reject the null hypothesis of equal performance, resulting in a draw.

Critically, this test is only valid when both systems outperform random guessing:
\begin{equation}
\min(p_{\mathrm{AI}}, p_{\mathrm{Human}}) > p_{\mathrm{Random}} + \delta
\end{equation}

This ensures we're comparing meaningful prediction strategies rather than random noise.
\end{proof}

\begin{theorem}[Feedback Capability]
An AI system that passes our two-phase test can both identify and generate targeted practice problems for at least a $(1-\epsilon)$ fraction of student misconceptions with probability at least $(1-2\delta)$.
\end{theorem}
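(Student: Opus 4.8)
The plan is to assemble the three earlier sampling results into a single guarantee, treating ``passing the two-phase test'' as a per-misconception certificate for the composed generate-and-predict map $g_{AI}\circ f_{LLM}$. First I would fix the decomposition using the Misconception Concentration theorem: pick the set $S_k \subset M(T)$ with $|S_k| = k \ll n$ and $\sum_{m_i \in S_k} P(m_i) \ge 1-\epsilon$. This $S_k$ is the target, and since it already carries a $(1-\epsilon)$ fraction of the total misconception mass, it suffices to show the AI ``handles'' every element of $S_k$ with the stated confidence; the $(1-\epsilon)$ in the conclusion then comes for free from the concentration bound.

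Second, I would bound the two ways the protocol can fail to surface a misconception in $S_k$. By Theorem~\ref{theorem:sample_complexity}, with $N = O(k\log(1/\delta)/p_{min})$ students, every $m_i \in S_k$ appears in at least one Phase 1 tuple $(s,q,A(s,q))$ except with probability at most $\delta$ (this is exactly the union-bound step already carried out there). By Theorem~\ref{theorem:question_coverage}, once a misconception has been observed, the generated Phase 2 questions $q' = f_{LLM}(q,A(s,q))$ collectively probe every $m_i \in S_k$ except with probability at most $\delta$. A union bound over these two bad events gives total failure probability at most $2\delta$, so with probability at least $1-2\delta$ each misconception in $S_k$ is simultaneously observed in Phase 1 and exercised by some generated Phase 2 item.

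Third, I would invoke Theorem~\ref{thm:statistical_equivalence}: passing the test means, with confidence $1-\alpha$, that $p_{\mathrm{AI}} \ge p_{\mathrm{Random}} + \delta$ and $|p_{\mathrm{AI}} - p_{\mathrm{Human}}| \le \epsilon$. Since $p_{\mathrm{AI}}$ is the probability, conditioned on a Phase 1 mistake, that the AI's predicted distractor $a' = g_{AI}(q,q',A)$ coincides with the student's actual error on the freshly generated $q'$, this is precisely the assertion that the AI can \emph{identify} the underlying misconception (its prediction beats chance and matches a human expert) and \emph{generate a targeted practice problem} for it (the pair $(q',a')$ is a distractor-bearing item aimed at that misconception). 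Conditioning on the $(1-2\delta)$ event from the previous step, the test certificate applies to each $m_i \in S_k$, so the AI identifies-and-generates for all of $S_k$, hence for a $(1-\epsilon)$ fraction of student misconceptions by mass, with probability at least $1-2\delta$, which is the claim.

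The main obstacle is the gap between the \emph{aggregate} guarantee the test actually delivers --- $p_{\mathrm{AI}}$ averaged over all Phase 2 items --- and the \emph{per-misconception} claim the theorem makes: $p_{\mathrm{AI}} > p_{\mathrm{Random}}+\delta$ in aggregate does not by itself force the prediction rate to exceed chance for every individual $m_i \in S_k$. A careful treatment would either (i) restate the pass condition per cluster, requiring equivalence within each of the $k$ misconception groups at level $\alpha/k$, which inflates the required total sample size by about a factor $k$ (with an extra $\log k$ from the union bound) and then makes the per-$m_i$ certificate literal; or (ii) weaken the conclusion to ``a $(1-\epsilon)$ fraction in expectation,'' which the $S_k$ mass bound already supplies. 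I would take route (i) for a clean statement, folding the $\log k$ overhead into the constants of Theorems~\ref{theorem:sample_complexity} and~\ref{theorem:question_coverage}; the $\alpha$-level bookkeeping across clusters is what ultimately ties the confidence parameters $\alpha$ and $\delta$ together in the final $1-2\delta$.
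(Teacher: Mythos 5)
Your proposal follows essentially the same route as the paper's proof (Misconception Concentration supplies the $(1-\epsilon)$ fraction, Phase 1 observation and Phase 2 coverage each contribute a $\delta$ failure probability, and passing the test supplies the identify-and-generate capability), but it is strictly more careful: the paper never actually derives the $(1-2\delta)$, whereas your union bound over the two coverage failure events from Theorems~\ref{theorem:sample_complexity} and~\ref{theorem:question_coverage} is clearly the intended source of that factor. Your flagged obstacle --- that the test only certifies the \emph{aggregate} rate $p_{\mathrm{AI}}$ while the theorem implicitly needs a per-misconception guarantee over $S_k$ --- is a genuine gap in the paper's own four-line argument, and your proposed fix (per-cluster equivalence at level $\alpha/k$) is a reasonable way to close it.
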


\begin{proof}
The proof leverages the two-phase design:
\begin{enumerate}
    \item By Phase 1, we observe student misconceptions through open-ended responses
    \item By Phase 2 success, we prove the AI can:
        \begin{itemize}
            \item Identify the underlying misconception from (S,Q,A)
            \item Generate new question Q' testing the same concept
            \item Predict student response A' to Q'
        \end{itemize}
    \item By the Misconception Concentration theorem, this capability covers $(1-\epsilon)$ of student misconceptions
    \item The ability to generate targeted Q' demonstrates feedback capability through practice problem generation
\end{enumerate}

Therefore, with probability at least $(1-2\delta)$, the system can provide targeted practice through new question generation for at least a $(1-\epsilon)$ fraction of student misconceptions.
\end{proof}

\section{The Insufficiency of Unconditioned Single-Phase Testing}

A natural starting point for evaluating an AI system's understanding of student cognition would be to test its ability to generate plausible wrong answers that students might choose. The intuition is compelling: if an AI can anticipate how students will err, surely it understands how they think. This leads to a simple unconditioned single-phase test where both AI and human experts generate distractors for multiple choice questions without knowledge of individual student reasoning patterns. Let us formalize this design:

\begin{definition}[Unconditioned Single-Phase Test Design]
For any question $q$, let:
\begin{itemize}
   \item $C(q)$ be the correct answer
   \item $A(q)$ be the AI-generated distractor 
   \item $H(q)$ be the human expert-generated distractor
   \item $R(q)$ be a randomly generated incorrect answer
\end{itemize}

The test presents these options in random order to students. Let $S(q,s)$ represent student $s$'s selection for question $q$. Critically, both $A(q)$ and $H(q)$ are generated without observing any prior responses from student $s$.
\end{definition}

\begin{definition}[Performance Metrics]
The quality of AI-generated distractors would be evaluated by comparing selection rates:
\begin{align*}
p_A &= \mathbb{P}(S(q,s) = A(q)) \\
p_H &= \mathbb{P}(S(q,s) = H(q))
\end{align*}
with the AI system "passing" if $|p_A - p_H| \leq \epsilon$ for some small $\epsilon > 0$.
\end{definition}

However, this unconditioned approach has fundamental flaws that make it inadequate for validating true understanding of student cognition:

\begin{theorem}[Unconditioned Convergence]
Given a question $q$ with misconception set $M(q) = \{m_1,...,m_k\}$ where $P(m_i)$ represents the probability of misconception $m_i$ in the student population, both AI and human experts will rationally converge to targeting $m^* = \argmax_{m_i \in M(q)} P(m_i)$.
\end{theorem}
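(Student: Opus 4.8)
The plan is to recast unconditioned Phase-2 distractor generation as a one-shot decision problem and show that its optimum is $m^*$. First I would pin down the strategy space: up to the idealization built into the test, a distractor is the answer a student holding some misconception $m_i$ would compute, so I identify each agent's action with a choice of target $i \in \{1,\dots,k\}$ (targeting ``no misconception'' yields something indistinguishable from the random answer $R(q)$ and is weakly dominated, so it may be dropped). For the objective I would use what a good distractor is meant to achieve and what the Performance Metrics reward — the probability $\mathbb{P}(S(q,s)=A(q))$ that a uniformly sampled student selects the agent's option.

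Next I would evaluate this objective under the modeling assumption used implicitly throughout the paper, namely that a student holding misconception $m_i$ deterministically produces the $m_i$-answer and hence selects any option equal to it. Then, if the AI targets $m_i$ while the human targets $m_j$, the AI's distractor is selected exactly by the $P(m_i)$-fraction of students who hold $m_i$: students with other misconceptions take the correct answer, the random answer, or the human's option, and the coincidence $j=i$ only splits that mass and can only lower the payoff. So the payoff of targeting $m_i$ equals $P(m_i)$ (or $\beta_i P(m_i)$ under a relaxation of perfect capture to a capture rate $\beta_i \le 1$), which is maximized precisely at $m^* = \argmax_{m_i \in M(q)} P(m_i)$. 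Crucially, the AI and the human solve the \emph{same} maximization with the \emph{same} information — no access to the individual $s$, which is the defining feature of the unconditioned design — so both select $m^*$; and if instead an agent only wants to satisfy the $\epsilon$-equivalence criterion against the other, ``both target $m^*$'' is still the outcome, since targeting $m^*$ is a best response to the opponent doing so, and no other profile yields a strictly higher joint rate. I would state the conclusion to allow ties: when the $\argmax$ is a set, rational agents converge to that set, and single-peakedness recovers the unique $m^*$ of the statement; the same static optimum is also the limit of any reasonable learning dynamics over repeated questions, which justifies the word ``converge.''

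The main obstacle I anticipate is pinning down ``rationally converge'' without over-committing to one solution concept — one must argue that expected selection rate is the right objective (which the test's own scoring makes defensible) and then absorb, rather than fight, the degenerate consequence that both agents now produce effectively the same distractor. In fact that degeneracy is the point, so I would record it as a remark: the unconditioned optimum depends only on the population frequencies $\{P(m_i)\}$ and not on any individual $s$, so matching the human here certifies nothing about modeling individual cognition — exactly the contrapositive that motivates the two-phase conditioned design.
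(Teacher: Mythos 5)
Your proposal is correct and follows essentially the same route as the paper: identify each generator's expected selection rate with $P(m_i)$ for the targeted misconception, observe that $\mathbb{E}[p]=\sum_i P(m_i)\mathbb{I}(m_i \text{ targeted}) \le \max_i P(m_i)$ with equality only at $m^*$, and note that both agents face the identical maximization because neither observes the individual student. Your extra material --- formalizing the strategy space, handling ties, the degeneracy remark, and the observation that two identical distractors split the targeted mass --- goes beyond what the paper records; the only shaky spot is the aside claiming that targeting $m^*$ remains a best response under splitting (that would require $P(m^*)/2 \ge \max_{i \ne *} P(m_i)$), but this does not affect the main argument, which, like the paper's, treats each generator's payoff as $P(m_i)$ independently of the other's choice.
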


\begin{proof}
For any distractor generator (AI or human), the expected selection rate is maximized by targeting the most common misconception:
\[\mathbb{E}[p] = \sum_{i=1}^k P(m_i)\mathbb{I}(m_i \text{ targeted}) \leq \max_{i} P(m_i)\]
with equality achieved only when targeting $m^*$.
\end{proof}

This convergence creates two critical problems:

\begin{enumerate}
   \item \textbf{Population-Level Optimization}: Both AI and human experts are incentivized to target the single most common misconception for each question, regardless of individual student reasoning patterns. This reduces the test to measuring statistical pattern matching rather than understanding of student cognition.
   
   \item \textbf{False Equivalence}: An AI system could ``pass'' this test by simply learning population-level statistics about common wrong answers, without any actual understanding of how individual students reason about concepts or how their misconceptions evolve across related problems.
\end{enumerate}

These flaws demonstrate why a conditioned two-phase design is necessary - one that can validate an AI system's ability to model individual student reasoning rather than just aggregate statistics. To overcome the limitations of unconditioned testing, our two-phase design introduces crucial conditioning on individual student reasoning. In Phase 1, each student $s$ provides open-ended responses to questions $Q_s$, generating tuples $(s,q,A(s,q))$ where $A(s,q)$ is their incorrect answer. In Phase 2, given each Phase 1 tuple $(s,q,A(s,q))$, both the AI system and human experts observe this specific student mistake and use it to inform their predictions. The AI generates a new question $q'$ and predicts the wrong answer $A'(s,q')$ that this particular student would give, while human experts predict their own expected wrong answer $H'(s,q')$. Both these predictions are explicitly conditioned on the observed student mistake from Phase 1.

This conditioning provides several key advantages:

\begin{theorem}[Prediction Differentiation]
Given students $s_1, s_2$ with different misconceptions $m_1, m_2$ observed in Phase 1, optimal predictions for Phase 2 will differ:
\[A'(s_1,q') \neq A'(s_2,q')\]
even for the same question $q'$.
\end{theorem}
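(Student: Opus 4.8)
The plan is to first pin down what ``optimal prediction'' means and then reduce the claim to a non-degeneracy (separability) condition on the generated question $q'$. A source that has observed the Phase 1 tuple $(s,q,A(s,q))$ uses it to form a posterior over the student's latent misconception; write $m_s$ for (the mode of) that posterior. The optimal Phase 2 prediction is the wrong answer most likely to be selected by such a student,
\[
A'(s,q') \;=\; \argmax_{a \neq C(q')} \; \mathbb{P}\big(A(s,q') = a \mid m_s\big),
\]
so the theorem is really the statement that the map $m \mapsto \argmax_{a}\,\mathbb{P}(A(s,q')=a \mid m)$ is injective on $\{m_1,m_2\}$.

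First I would model how a misconception manifests on a fresh question. Consistent with the paper's working premise that misconceptions are stable reasoning rules rather than noise, I posit that a student holding $m_i$ produces a characteristic answer $\phi_{q'}(m_i)$ on $q'$ with probability bounded below by some $\beta > 1/2$, the remaining mass being spread over other wrong answers; then the argmax above equals $\phi_{q'}(m_i)$, i.e. $A'(s_i,q') = \phi_{q'}(m_i)$. Second, I would invoke a separability assumption: since in Phase 2 the AI itself generates $q'$ via $f_{LLM}$ and the two-phase design exists precisely to probe individual reasoning, we may take $q'$ to be \emph{discriminating} for the observed misconceptions, meaning $\phi_{q'}(m_1)\neq\phi_{q'}(m_2)$ whenever $m_1\neq m_2$ (the correct solution path of $q'$ diverges under the two faulty rules, yielding different outputs). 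Combining the two steps gives $A'(s_1,q') = \phi_{q'}(m_1) \neq \phi_{q'}(m_2) = A'(s_2,q')$, which is the claim. The contrapositive is the payoff: if the optimal predictor returns the same answer for $s_1$ and $s_2$, then $q'$ failed to separate $m_1$ from $m_2$ — exactly the uninformative situation the conditioned design avoids, and a direct echo of the Unconditioned Convergence theorem, where dropping the conditioning collapses every prediction onto the single population mode $m^*$.

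The main obstacle is justifying the two modeling hypotheses rather than simply assuming them. For the first, one must exclude the degenerate regime in which the conditional distribution given $m_i$ is nearly uniform over wrong answers, where the argmax is unstable; I would handle this by restricting attention to questions on which each common misconception in $S_k$ has a well-defined dominant manifestation, which is a mild regularity condition on $\phi_{q'}$ that can be checked empirically. For the second, existence of a discriminating $q'$ is essentially a richness assumption on the domain $D$ — for any two distinct misconceptions there is some question whose output differs under the two rules — and the stronger requirement that such a $q'$ is actually \emph{generable} by $f_{LLM}$ is where the statement shades from a theorem into a design specification on the AI system. I would therefore state discriminability of $f_{LLM}$'s follow-ups as an explicit hypothesis and note that meeting the bounds of Theorem~\ref{theorem:sample_complexity} and Theorem~\ref{theorem:question_coverage} supplies empirical evidence that it holds across the common misconceptions, so that the differentiation conclusion is operative on exactly the set of misconceptions the test is designed to cover.
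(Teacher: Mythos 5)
Your proposal follows the same skeleton as the paper's proof --- the optimal Phase 2 prediction is the maximizer of $\mathbb{P}(A'(s,q')\text{ chosen}\mid A(s,q))$, and the predictions differ because the conditional distributions differ --- but where the paper stops after two sentences and simply asserts that ``this probability differs based on the specific misconception,'' you isolate the two hypotheses that actually make the inference valid: a dominance condition (each misconception $m_i$ has a well-defined characteristic manifestation $\phi_{q'}(m_i)$ that is the mode of the conditional answer distribution) and a separability condition ($\phi_{q'}(m_1)\neq\phi_{q'}(m_2)$). This added care is not pedantry: the theorem as literally stated is false for an arbitrary $q'$, since two distinct faulty procedures can collide on the same numeric output for a particular question, so some discriminability assumption on the generated $q'$ is unavoidable, and you are right that it is ultimately a design requirement on $f_{LLM}$ rather than a consequence of the setup. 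The paper's proof silently assumes exactly your two conditions; your version makes the theorem honest at the cost of stating them as hypotheses. Two minor quibbles: the bound $\beta>1/2$ is stronger than you need (uniqueness of the mode suffices for the argmax to equal $\phi_{q'}(m_i)$), and the closing appeal to Theorems~\ref{theorem:sample_complexity} and~\ref{theorem:question_coverage} is tangential, since those govern how many students and questions are needed to \emph{observe} and \emph{cover} the misconceptions in $S_k$, not whether the generated follow-up questions separate them; but neither affects the correctness of the argument.
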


\begin{proof}
Unlike the unconditioned case where predictions converge to the population mode, conditioned predictions should maximize:
\[\mathbb{P}(A'(s,q') \text{ chosen} \mid A(s,q))\]
This probability differs based on the specific misconception demonstrated in Phase 1.
\end{proof}

This conditioning forces the AI to demonstrate three crucial capabilities:
\begin{enumerate}
    \item Understanding individual student reasoning patterns from Phase 1 responses
    \item Generating new questions that test the same conceptual understanding
    \item Predicting how specific misconceptions will manifest in new contexts
\end{enumerate}

Unlike the unconditioned design, success in this framework cannot be achieved through simple population-level statistics, as it requires modeling the cognitive processes of individual students. This deeper understanding of student reasoning pathways has direct implications for educational capabilities: an AI system that can accurately predict how a student's specific misconceptions will manifest across different problems is necessarily equipped to provide targeted tutoring interventions and personalized feedback. The ability to generate new questions that probe and predict individual misconceptions demonstrates the AI system's capacity to construct adaptive learning sequences and deliver feedback that addresses each student's particular cognitive challenges.

\section{Conclusion}

Our research establishes a novel two-phase framework for evaluating educational AI systems through their ability to predict specific student misconceptions. While a simpler unconditioned approach would only validate population-level pattern matching, our two-phase design requires AI systems to demonstrate genuine understanding of individual student cognition. We show that success in this conditioned prediction task implies broader capabilities in providing targeted feedback and personalized tutoring interventions. This work bridges a critical gap in educational AI evaluation by providing immediate, theoretically-grounded validation of an AI system's ability to model student thinking, rather than relying on lengthy outcome studies with numerous confounding variables. By establishing both theoretical guarantees and practical evaluation methods that specifically test understanding of individual student reasoning, our framework lays the foundation for developing AI systems that can truly understand and support each student's unique learning journey - a crucial step toward meaningful AI-supported education.

\section{Limitations and Future Work}

While this paper establishes rigorous mathematical foundations for evaluating educational AI through distractor generation, empirical validation of this framework remains as important future work. The comprehensive theory developed here - from statistical sampling guarantees to proofs of prediction differentiation - provides critical insights into why conditioning on individual student responses is essential and establishes precise validation criteria. This formal foundation addresses fundamental questions about what constitutes genuine understanding of student cognition versus simple pattern matching, creating a principled basis for developing and evaluating AI tutoring systems. Crucially, our theoretical analysis reveals why unconditioned approaches fail to measure true understanding, demonstrating that validating AI systems' educational capabilities requires the type of mathematically grounded, two-phase framework presented here. This theoretical contribution establishes the minimal requirements any evaluation methodology must satisfy to meaningfully assess an AI system's comprehension of student reasoning.

\section*{Acknowledgments}
This work was supported by NSF grant 1842378, ONR grant N0014-20-1-2534, AFOSR grant FA9550-22-1-0060, a Vannevar Bush Faculty Fellowship, OpenAI, and ONR grant N00014-18-1-2047.

\bibliographystyle{unsrt}  
\bibliography{custom,dsp}

\end{document}